\documentclass{article} 
\usepackage{iclr2026_conference,times}

\usepackage{amsmath,amsfonts,bm}

\def\eqref#1{equation~\ref{#1}}

\def\1{\bm{1}}

\def\vu{{\bm{u}}}
\def\vv{{\bm{v}}}

\def\vx{{\bm{x}}}
\def\vy{{\bm{y}}}

\def\mA{{\bm{A}}}
\def\mB{{\bm{B}}}
\def\mC{{\bm{C}}}
\def\mD{{\bm{D}}}

\def\mG{{\bm{G}}}
\def\mH{{\bm{H}}}
\def\mI{{\bm{I}}}

\def\mU{{\bm{U}}}
\def\mV{{\bm{V}}}

\def\mLambda{{\bm{\Lambda}}}

\DeclareMathAlphabet{\mathsfit}{\encodingdefault}{\sfdefault}{m}{sl}
\SetMathAlphabet{\mathsfit}{bold}{\encodingdefault}{\sfdefault}{bx}{n}

\usepackage{hyperref}
\usepackage{graphicx}
\usepackage{url}
\usepackage[ruled,vlined,linesnumbered]{algorithm2e}
\usepackage{amsmath,amssymb}
\usepackage{booktabs}   
\usepackage{natbib}
\usepackage{array}      
\usepackage{siunitx}    
\usepackage{amsthm}    

\newtheorem{lemma}{Lemma}

\usepackage{subcaption}
\usepackage{caption} 
\usepackage{array}

\title{AIRE-Prune: Asymptotic Impulse-Response Energy for State Pruning in State Space Models}

\author{Apurba Prasad Padhy\textsuperscript{1}, Fernando Camacho\textsuperscript{2} \& Saibal Mukhopadhyay\textsuperscript{1} \\
\textsuperscript{1} Georgia Institute of Technology, Atlanta, USA \\
\textsuperscript{2} Laboratory for Physical Sciences, College Park, Maryland, USA \\
\textsuperscript{1}\texttt{apadhy9@gatech.edu,saibal.mukhopadhyay@ece.gatech.edu} \\
\textsuperscript{2}\texttt{fercamacho@lps.umd.edu}
}

\iclrfinalcopy 
\begin{document}

\maketitle

\begin{abstract}
State space models (SSMs) often sacrifice capacity, search space, or stability to offset the memory and compute costs of large state dimensions. We introduce a structured post-training pruning method for SSMs — AIRE-Prune (\underline{A}symptotic \underline{I}mpulse- \underline{R}esponse \underline{E}nergy for State \underline{PRUN(E)}ing ) — that reduces each layer’s state dimension by directly minimizing long-run output-energy distortion. AIRE-Prune assigns every state a closed-form \emph{asymptotic impulse-response energy} based score, i.e., the total impulse-response energy it contributes over an infinite horizon (time), and normalizes these scores layer-wise to enable global cross-layer comparison and selection. This extends modal truncation from single systems to deep stacks and aligns pruning with asymptotic response energy rather than worst-case gain. Across diverse sequence benchmarks, AIRE-Prune reveals substantial redundancy in SISO and MIMO SSMs with average pruning of ~60.8\% , with average accuracy drop of ~0.29\% without retraining while significantly lowering compute. Code will be released: \href{https://github.com/falcon-arrow/AIRE-Prune}{https://github.com/falcon-arrow/AIRE-Prune}.
\end{abstract}
\section{Introduction}
Deep state space models (SSMs) have proven effective in modeling sequential data by optimally compressing input history into internal states \citep{gu2020hippo, gu2021s4, gu2022s4, gu2023mamba, zhang2023dss, parnichkun2024statefree}. Alongside these advances, a persistent challenge is to train SSMs efficiently and stably without divergence. Building on classical linear systems theory \,\citep{kailath}, recent work has developed stability-guaranteeing parameterizations \citep{gu2022a}, general architectural blueprints \citep{smith2023simplified}, and frequency-domain implementations that leverage transfer functions and FFTs for throughput \citep{gu2022dssparam, zhang2023dss, parnichkun2024statefree}.

A central driver of computation and memory in SSMs is the \emph{state dimension} \(n\). Since early proposals, a multiple single-input single-output (multi-SISO) construction has been widely adopted for scalable training: many small SISO subsystems are learned in parallel and then mixed through channel projections \,[Gu et al., 2022b,a; Gu and Dao, 2023; Zhang et al., 2023; Parnichkun et al., 2024]. Within this setting, diagonal (or diagonalizable) systems have been shown to match the accuracy of more general non-diagonal systems while remaining efficient \,[Gupta et al., 2022]. In parallel, multi-input multi-output (MIMO) SSMs (e.g., S5-style stacks) explicitly exploit multi-channel structure, often attaining strong performance with a smaller \(n\) than equivalently sized multi-SISO blocks; this advantage is especially evident on long-horizon tasks such as Path‐X \citep{smith2023simplified, parnichkun2024statefree} \,. Despite these developments, both multi-SISO and MIMO families typically lack \emph{post-training mechanisms} to optimize \(n\), leading to over-parameterization and unnecessary inference cost.

Several orthogonal directions attempt to reduce complexity including unstructured weight pruning or structured state pruning strategies. This work focuses on structured state pruning as it translates to real computation reduction with reduced dimensionality. Parameterizing transfer functions for SISO systems can enable state-free inference, but tends to restrict the search space or guarantee stability only at initialization \citep{parnichkun2024statefree}. Frequency-domain kernels accelerate training and inference but do not directly resolve redundancy in the learned state space \citep{gu2022dssparam, gupta2022dss, zhang2023dss}. This motivates \emph{post-training model order reduction} (MOR) in deep SSMs: identify and remove states that minimally affect the task output while preserving accuracy.

Layer-adaptive pruning provides a practical MOR route. Recent work (LAST) \citep{LAST} proposed to score each learned subsystem by a worst-case, frequency-domain gain (an \(H_\infty\) view), then normalize scores layer-wise to enable global, cross-layer selection and pruning with bounded output distortion. Empirically, this revealed that trained SSMs are often highly compressible, achieving sizable state reductions with minimal accuracy loss on long-range sequence benchmarks (e.g., LRA and Speech Commands). While powerful, worst-case measures can be conservative for typical workloads, as they emphasize peak amplification that tasks may rarely excite.

\paragraph{This work.}
We introduce \textbf{AIRE-Prune} (\underline{A}symptotic \underline{I}mpulse- \underline{R}esponse \underline{E}nergy for State \underline{PRUN(E)}ing), a structured, post-training, layer-adaptive method for pruning states in deep state space model. The contribution of this work are as follows:
\begin{itemize}
\item \textbf{Energy-based ranking:}
  ranks each state (mode) by its \emph{total output energy} over infinite time when excited under a \emph{unit impulse input}.
\item \textbf{Closed-form per-mode energy score:}
assigns every learned mode a \emph{closed-form} total output energy score, enabling fast, principled importance ordering.
\item \textbf{Cross-layer normalization $\Rightarrow$ global pruning score:}
  normalizes per-layer energy scores to a common scale, enabling \emph{global, cross-layer} comparison and selection of the
  \emph{least-energetic} (insignificant) states.
  
\end{itemize}
 This extends classical modal truncation from single systems to deep stacks and aligns pruning with \emph{asymptotic impulse response energy}—a typical-case criterion—rather than a worst-case gain approach. AIRE-Prune is architecture-agnostic (covering multi-SISO and MIMO SSMs).

\paragraph{Results.}
On Long Range Arena with S5-style MIMO SSMs, AIRE-Prune \emph{prunes on average} \textbf{60.8\%} of states with only \textbf{0.29\%} average accuracy degradation \emph{without retraining}. These findings indicate that state spaces in trained SSMs contain substantial removable redundancy and that aligning pruning with total output energy yields strong compression at negligible cost to task performance, surpassing worst-case frequency gain-based approaches.

\section{Related Work}
\subsection{Model Order Reduction (MOR)}
\label{sec:rw-mor}
MOR approximates high-dimensional linear systems by lower-order models with controlled error, with applications in VLSI~\citep{antoulas2001approximation}, power systems~\citep{li1999efficient}, and PDE discretizations~\citep{jones2010discretization,curtain2012infinite}. 
\emph{Modal truncation} removes states from a diagonal realization using frequency-domain criteria that limit $H_\infty$ distortion~\citep{green2012linear}. 
\emph{Balanced truncation} constructs a realization in which states are simultaneously controllable and observable, then discards the least energetic ones, offering strong error guarantees and many variants~\citep{jones2010discretization, curtain2012infinite}. 
However, the similarity transforms central to balanced truncation eliminate the diagonal structure exploited by modern SSMs for efficiency and stability. 
\textbf{Our stance:} we preserve diagonal parameterization and \emph{per-state} granularity (as in modal truncation) and extend the setting from a single linear system to \emph{deep stacks with nonlinearities} encountered in contemporary sequence models.

\subsection{Layer-Adaptive Pruning in Deep Networks}
\label{sec:rw-layer-adaptive}
Allowing different layers to prune by different amounts (layer-adaptive pruning) \citep{morcos2019oneticket, han2015learning,mocanu2018sparseevolutionary,LAMP,xu2023layeradaptivepruning} generally outperforms uniform ratios \citep{zhu2017toprune,gale2019stateofsparsity}. 
Approaches include layerwise magnitude thresholds/targets and \emph{global} criteria that compare scores across layers under a single budget. 
For example,  provides a Frobenius-norm bound on worst-case $\ell_2$ distortion when pruning one layer (others frozen), and  extends this to joint optimization over layerwise ratios. 
Because SSMs are governed by transfer functions and dynamical couplings—not only static weights—a \emph{non-magnitude} importance criterion is essential. 
\textbf{Our stance:} we adopt the layer-adaptive paradigm but base importance on \emph{total output energy}, yielding closed-form, mode-separable scores that respect controllability, observability, and damping.

\subsection{State Pruning for Deep SSMs: LAST vs.\ This Work}
\label{sec:rw-ssm-pruning}
\textsc{LAST} \citep{LAST} ranks states via a \emph{worst-case} frequency-domain measure: per-state $H_\infty$ scores are normalized to produce a \emph{global} cross-layer ranking under a model-level budget, giving a \emph{peak-gain} bound on output distortion and showing trained SSMs are highly compressible. 
\textbf{This work (AIRE-Prune):} we retain per-state, layer-adaptive pruning but use \emph{infinite-horizon output energy} (total impulse-response energy) as a \emph{typical-case} criterion that emphasizes long-run expenditure through controllability/observability/damping. 
This energy view yields closed-form, layer-normalized, globally comparable scores and, in practice, enables aggressive compression with negligible degradation in accuracy. Refer appendix \ref{comp_last}, for detailed mathematical difference.

\section{Background}
\label{sec:background}

\subsection{Stability of state space models}
\label{sec:background:stability}
A DT LTI SSM is asymptotically stable iff its poles lie strictly inside the unit circle, i.e., $\rho(A)<1$. Directly enforcing this during training is nontrivial: constraining parameters to a fixed stable set (e.g., disk/polygon) guarantees stability but can reduce expressivity, while “center-of-stability” initialization helps early training yet offers no guarantee thereafter. Diagonal (or diagonalizable) SSMs mitigate this by \emph{parameterizing poles directly}: we use a diagonal DT realization with conjugate pairs for real I/O. When derived from CT diagonal poles with $\Re(\lambda_i^{\mathrm{ct}})<0$, zero-order hold at step $\Delta_i$ gives $\lambda_i=\exp(\lambda_i^{\mathrm{ct}}\Delta_i)$, hence $|\lambda_i|<1$. This diagonal parameterization preserves stability by construction and enables classical analyses (Lyapunov, modal formulas) within deep SSM layers.

\subsection{Diagonal State Space Models (S5-style)}
\label{sec:bg-s5}

\paragraph{High-level architecture.}
An S5 layer processes a length-$T$ sequence $u_{0:T-1}\in\mathbb{R}^{h}$ through a diagonal, linear time-invariant (LTI) core followed by a pointwise nonlinearity and channel mixing. As in prior diagonal SSMs, the stack comprises: an \emph{encoder} that lifts inputs to $h$ channels, a sequence of \emph{LTI+nonlinearity} blocks, and a \emph{decoder} tailored to the downstream task. Each block can be implemented either as a \emph{multi-SISO} assembly (one diagonal SISO system per channel) or as a single \emph{MIMO} system acting jointly on the $h$ channels. We adopt a unified MIMO description whose \emph{effective} state dimension per layer is
\begin{equation}
    n \;=\; 
\begin{cases}
n_s\,h & \text{for multi-SISO ($n_s$ states per channel)}\\
n_m    & \text{for MIMO ($n_m$ shared states)}
\end{cases}
\end{equation}

so that multi-SISO is a structured special case of MIMO with block-diagonal couplings.
\paragraph{Parameterization.}
We model each diagonal SSM layer (state size $n$; channels $h$) in continuous time (CT) by
$\vx(t)=\mLambda \vx(t)+\mB \vu(t)$ and $\vy(t)=\mC \vx(t)+\mD \vu(t)$, where
$\mLambda=\mathrm{diag}(\lambda_1,\ldots,\lambda_n)\in\mathbb{C}^{n\times n}$ is diagonal,
$\mB\in\mathbb{C}^{n\times h}$, $\mC\in\mathbb{C}^{h\times n}$, and $\mD\in\mathbb{R}^{h\times h}$.
To preserve real inputs/outputs, complex parameters appear in conjugate pairs~\citep{gu2022a}.
Following diagonal SSM practice, each mode $i$ carries its own step size $\Delta_i>0$
(collected into $\Delta\in\mathbb{R}^n$), so zero-order hold (ZOH) discretization
yields a diagonal DT system
\begin{equation}
    \vx_{k+1}=\mLambda \vx_k + \mB \vu_k,\qquad \vy_k = \mC \vx_k + \mD \vu_{k},
\end{equation}
\begin{equation}
where, \quad\mLambda_d=\mathrm{diag}\!\big(e^{\lambda_i \Delta_i}\big),\qquad
\mB_d=\mLambda^{-1}\!\big(\mLambda_d-I\big)\mB,
\end{equation}

where the $\mB_d$ expression holds elementwise and extends continuously at $\lambda_i=0$.
Stability is enforced at the CT level (Hurwitz), i.e., $\Re(\lambda_i)<0$, which implies
$|e^{\lambda_i\Delta_i}|<1$ and thus DT contractivity. A pointwise nonlinearity wraps the
linear core to form the layer output,
$f_\sigma(\vu_k;\Sigma)=\sigma\!\big(\mC \vx_k+\mD \vu_k\big)$, optionally combined with residual and
normalization layers. This diagonal parameterization admits closed-form per-mode discretization,
simple stability control, and efficient kernel/scan implementations, while remaining applicable
to both multi-SISO ($n=n_s h$) and MIMO ($n=n_m$) instantiations under a unified notation.

\section{AIRE-Prune: Asymptotic Impulse-Response Energy for State Pruning in State Space Models}
\begin{figure*}[t]
  \centering
  \includegraphics[width=0.95\linewidth]{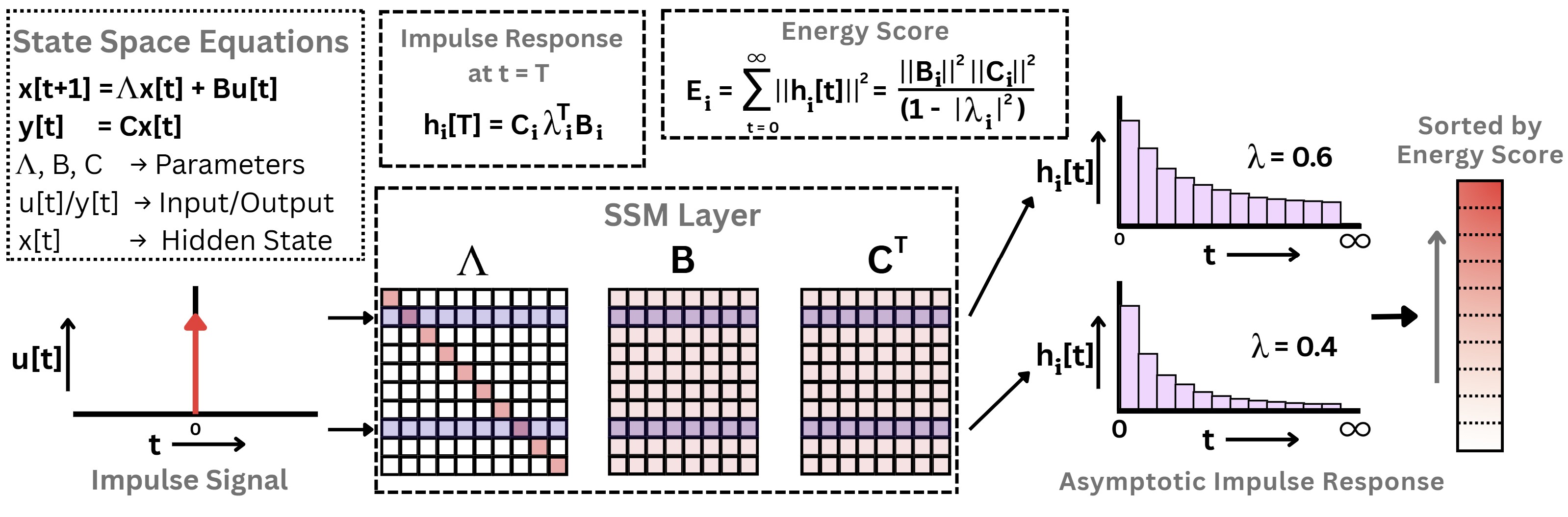}
  \caption{AIRE-Prune: Asymptotic Impulse Response Local Energy Score}
  \label{fig:aire-last}
\end{figure*}
\paragraph{State space model.}
We begin with a discrete-time (DT) diagonal state space layer
\begin{equation}
    \vx_{k+1} = \mLambda \vx_k + \mB \vu_k, \qquad \vy_k = \mC \vx_k ,
\end{equation}
where $\mLambda = \mathrm{diag}(\lambda_1,\ldots,\lambda_n)$ with $|\lambda_i| < 1$ (all poles strictly inside the unit circle).
This spectral condition is the DT counterpart of asymptotic stability in control: it guarantees that perturbations decay and that linear system quantities defined as \emph{infinite sums over time} are finite.
Diagonal (or diagonalizable) parameterizations make stability explicit through pole variables $\{\lambda_i\}$ and enable mode-wise analysis.

\paragraph{Why energy as importance metric?}
In linear time-invariant (LTI) systems, ``importance'' has a canonical meaning:
\emph{how much output energy a direction (state) can transmit from inputs to outputs}.
For stable discrete-time systems this energy is literally an \emph{area under a curve},
over time (sum of squared impulse response) or over frequency (integral of squared transfer magnitude).
This section develops that basic connection and, from it, derives a simple state-importance score and a practical pruning rule.
An impulse (or white noise) excites all frequencies equally, so the measured energy reflects
\emph{aggregate} amplification (realizable to real input variations), not a single resonant tone.

For total (steady-state) output energy corresponding to a state, we consider a stable discrete-time SSM layer $\Sigma : (\mLambda,\mB,\mC)$ with zero initial state and an impulse input $u_k = \delta_{k0}$. The impulse response is
\[
\mH_k = \mC \mLambda^{k} \mB,
\]
and the output is $\vy_k = \mH_k$.
The (squared) \emph{steady-state output energy} is the discrete-time $\mathcal{H}_2$ energy
\begin{equation}
\label{eq:energy-layer}
\|\Sigma\|_{\mathrm{energy}}^2
\;=\;
\underbrace{\frac{1}{2\pi}\int_0^{2\pi}\|\mG(e^{j\omega})\|_F^2\,d\omega}_{\text{frequency domain}}
\;=\;
\underbrace{\sum_{k=0}^{\infty}\|\mH_k\|_F^2}_{\text{time domain}}
\;=\;
\sum_{k=0}^{\infty} \|\mC \mLambda^{k} \mB\|_F^2 ,
\end{equation}
where the transfer function is $\mG(z) = \mC (z\mI - \mLambda)^{-1} \mB$, and Parseval's identity yields the equality of time- and frequency-domain energies.
This admits two equivalent interpretations that connect directly to control and signal processing:
(i) \emph{impulse energy}: the total squared output produced by a unit impulse, accumulated from $k = 0$ to $\infty$; and 
(ii) \emph{long-run output power under unit white noise}: the steady-state variance of $\vy_k$ when $u_k \sim \mathcal{N}(0,\mI)$ (by Parseval's identity).
Both views rely on $|\lambda_i| < 1$ to ensure that the series in ~\eqref{eq:energy-layer} converges, which is the case for the diagonal SSMs considered here.


\paragraph{Per-mode energy via a geometric progression (finite horizon).}
For a single diagonal mode $\Sigma_i : (\lambda_i,\, \mB_{i,:},\, \mC_{:,i})$ with $|\lambda_i| < 1$,
the rank-1 impulse slice is $\mH_t^{(i)} = \mC_{:,i}\,\lambda_i^{\,t}\,\mB_{i,:}$.
Using $\|\vu \vv^\top\|_F^2 = \|\vu\|_2^2\,\|\vv\|_2^2$ for rank-1 outer products,
\begin{equation}
    \big\|\mH_t^{(i)}\big\|_F^2
    = |\lambda_i|^{2t}\,\|\mC_{:,i}\|_2^2\,\|\mB_{i,:}\|_2^2.
\end{equation}

Therefore, the (truncated) output energy contributed by mode $i$ over horizon $T$ is the geometric sum
\begin{equation}
\label{eq:EiT-gp}
\boxed{~
E_i(T)
= \sum_{t=0}^{T-1}\!\big\|\mH_t^{(i)}\big\|_F^2
= \|\mC_{:,i}\|_2^2\,\|\mB_{i,:}\|_2^2\,
\sum_{t=0}^{T-1}\!|\lambda_i|^{2t}
= \|\mC_{:,i}\|_2^2\,\|\mB_{i,:}\|_2^2\,
\frac{1 - |\lambda_i|^{2T}}{1 - |\lambda_i|^2}\, .~
}
\end{equation}

Stability of the state space model implies $|\lambda_i|^{2T} \to 0$ as $T \to \infty$. Taking the limit in ~\eqref{eq:EiT-gp} gives the
\emph{per-mode steady-state output energy which is equivalent to the impulse response energy.}
\begin{equation}
\label{eq:Ei-infinite}
E_i
= \lim_{T\to\infty} E_i(T)
= \frac{\|\mC_{:,i}\|_2^2\,\|\mB_{i,:}\|_2^2}{1 - |\lambda_i|^2}\, ,
\end{equation}
and the total layer energy can be approximated as an additive sum across modes (Appendix \ref{subsec:rank1_identity}),
\[
\|\Sigma\|_{\mathrm{energy}}^2 \approx \sum_{i=1}^{n} E_i.
\]

\begin{figure*}[t]
  \centering
  \includegraphics[width=0.95\linewidth]{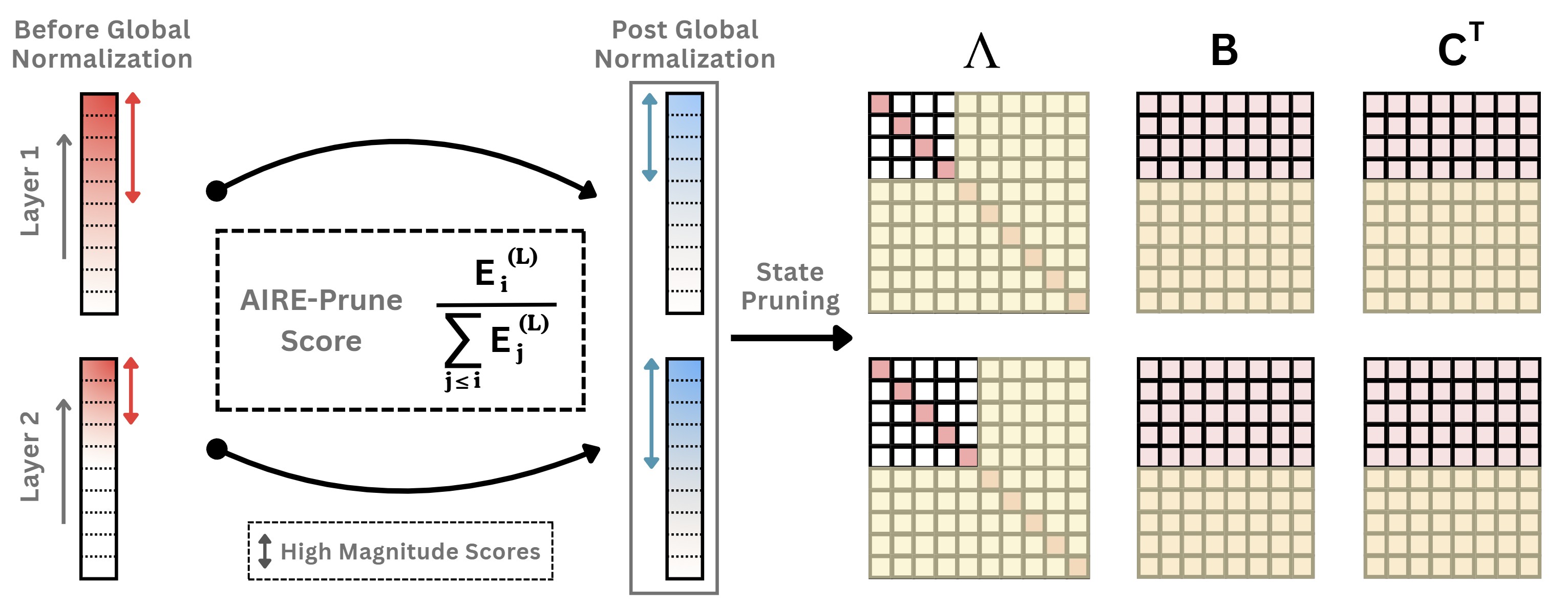}
  \caption{AIRE-Prune: Prefix-normalized global scoring across all the layers. (Yellow shade defines pruned states as they correspond to low magnitude score) }
  \label{fig:aire-last}
\end{figure*}

Pruning a set $P$ of states removes their modal responses. 
The resulting change in layer energy is approximately the sum of their individual contributions:
\begin{equation}
\label{eq:layer-drop}
\|\Sigma\|_{\mathrm{energy}}^2 - \|\Sigma_{-P}\|_{\mathrm{energy}}^2 
\;\approx\; \sum_{i\in P} E_i .
\end{equation}
Thus, to minimize steady-state distortion at the layer level, one should remove the \emph{smallest-energy} modes. 
We therefore define the \textbf{energy (AIRE-Prune local) score} for state $x_i$ as
\begin{equation}
\label{eq:AIRE-Prune-local}
\boxed{
\mathrm{EnergyScore}_{\mathrm{local}}(x_i) \;=\; E_i 
\;=\; \frac{\|\mC_{:,i}\|_2^2\,\|\mB_{i,:}\|_2^2}{1 - |\lambda_i|^2}\, .
}
\end{equation}

\emph{Special cases.} 
If $\mB$ is fixed and row-normalized (common in practice), then $\|\mB_{i,:}\|_2 = 1$ and
$E_i = \|\mC_{:,i}\|_2^2 / (1 - |\lambda_i|^2)$.
For bidirectional layers, $\|\mC_{:,i}\|_2^2$ is replaced by the average of the forward and backward contributions.

\paragraph{Intuition: Why this score is a good importance measure.}
The score in Eq.~\eqref{eq:AIRE-Prune-local} is (a) \emph{mode-separable} (no large matrix solves), (b) \emph{scale-aware} through the $\mB$/$\mC$ couplings, and (c) \emph{dynamics-aware} through the pole damping.
A state with larger $\|\tilde{\mB}_{i,:}\|_2$ (more controllable, i.e., easier to excite), larger $\|\tilde{\mC}_{:,i}\|_2$ (more observable, i.e., easier to measure), or larger $|\lambda_i|$ (longer memory) has larger $E_i$ and therefore induces a larger accuracy or variance drop if removed.
This aligns pruning with steady-state distortion and empirically enables aggressive compression with negligible accuracy loss.


\paragraph{From local to global: layer normalization.}
Per-mode asymptotic energies \(E_i^{(\ell)}\) can differ in scale across layers (e.g., due to encoder/decoder gains), therefore we need to normalize individual layers in such a way to have a common global threshold for state pruning. The per-layer scales can differ substantially, so we first sort the states of each layer by their
Per-mode Asymptotic Energy in descending order.  Let \(E^{(\ell)}_{(i)}\) denote
the \(i\)-th largest per-mode asymptotic energy in layer \(\ell\), and define the \emph{prefix sum}
\[
S^{(\ell)}_{(i)} \;=\; \sum_{j\le i} E^{(\ell)}_{(j)}\, .
\]
We then use the \textbf{prefix-normalized score (AIRE-Prune)} 
\begin{equation}
\label{eq:prefix_norm}
\mathrm{AIREPrune }_{ }\!\big(x^{(\ell)}_{(i)}\big)
\;=\;
\frac{E^{(\ell)}_{(i)}}{S^{(\ell)}_{(i)}+\varepsilon}\,,
\end{equation}
with a small \(\varepsilon>0\) for numerical stability.  This ``hazard-rate'' ratio is
\emph{monotonically non-increasing} in \(i\), enabling an \emph{elbow-style, layer-adaptive}
rule: given a single global threshold \(\tau\), we \emph{keep} the longest prefix in each
layer for which \(\mathrm{Score}_{\mathrm{prefix}}\ge\tau\) and prune the remaining contiguous
tail.  
\begin{algorithm*}[t]
\caption{AIRE-Prune: Asymptotic Impulse-Response Energy for State Pruning}
\DecMargin{1em}   
\label{alg:aire_prune}
\DontPrintSemicolon
\KwIn{
SSM Layers $\{\ell=1\ldots L\}$ with parameters $(\mLambda^{(\ell)},\mB^{(\ell)},\mC^{(\ell)})$;
prune ratio $p$.\\
}
\KwOut{Kept-index sets $\{\mathcal{K}^{(\ell)}\}$ and pruned sets $\{\mathcal{P}^{(\ell)}\}$.}

\BlankLine
\textbf{Step 1: Per-mode asymptotic energy $E_i^{(\ell)}$.}\;
\For{$\ell \leftarrow 1$ \KwTo $L$}{
  Extract modal triplets $\{(\lambda^{(\ell)}_i,\; \mB^{(\ell)}_{i,:},\; \mC^{(\ell)}_{:,i})\}_{i=1}^{n_\ell}$ with $|\lambda^{(\ell)}_i|<1$.\;
  $E^{(\ell)}_i \gets \dfrac{\|\mC^{(\ell)}_{:,i}\|_2^2\,\|\mB^{(\ell)}_{i,:}\|_2^2}{\,1-|\lambda^{(\ell)}_i|^{2}\,}$ }

\BlankLine
\textbf{Step 2: Per-layer sorting and prefix sums.}\;
\For{$\ell \leftarrow 1$ \KwTo $L$}{
  Sort modes by $E^{(\ell)}_i$ in descending order to get $E^{(\ell)}_{(1)} \ge \cdots \ge E^{(\ell)}_{(n_\ell)}$ and their indices.\;
  Compute prefix sums:  $S^{(\ell)}_{(i)} \gets \sum_{j\le i} E^{(\ell)}_{(j)}$ for $i=1\ldots n_\ell$.\;
  Compute AIRE-Prune scores:  $s^{(\ell)}_{(i)} \gets \dfrac{E^{(\ell)}_{(i)}}{S^{(\ell)}_{(i)}+\varepsilon}$ where $\varepsilon \rightarrow 0$ }

\BlankLine
\textbf{Step 3: Global selection}\;
\uIf{prune ratio $p$ given}{
  $B \gets \sum_{\ell=1}^L n_\ell \cdot (1-p)$ 
}
\BlankLine
Sort AIRE-Prune in decending order to get global list $\mathcal{L} \gets \{(s^{(\ell)}_{(i)},\,\ell,\,i)\ \mid\ 1\le \ell\le L,\ 1\le i\le n_\ell\}$.\;
Find the value $\tau$ equal to the $B$-th largest score in $\mathcal{L}$ 
Define the kept count per layer by $k_\ell \gets \max\{\,i:\ s^{(\ell)}_{(i)} \ge \tau\,\}$

\BlankLine
\textbf{Step 4: Materialize keep/prune indices.}\;
\For{$\ell \gets 1$ \KwTo $L$}{
  $\mathcal{K}^{(\ell)} \gets$ original indices of the top $k_\ell$ sorted modes;\quad
  $\mathcal{P}^{(\ell)} \gets$ Pruned indices.\;
}

\Return $\{\mathcal{K}^{(\ell)}\},\,\{\mathcal{P}^{(\ell)}\}$\;



\end{algorithm*}

\paragraph{Summary.}
Stability ensures that infinite-horizon energy is well-posed; energy decomposes additively by modes in diagonal SSMs; the closed-form per-mode energy~\eqref{eq:AIRE-Prune-local} gives an interpretable, dynamics- and scale-aware \emph{local} importance; and simple \emph{layer normalization}~\eqref{eq:prefix_norm} turns these into globally comparable scores for cross-layer sensitive pruning. Refer Appendix for mathematical analysis of the above. 

\begin{figure*}[t]
  \centering
  \begin{tabular}{
      m{0.47\textwidth}
      @{\hspace{0.04\textwidth}\vrule\hspace{0.02\textwidth}}
      m{0.47\textwidth}
    }
    \begin{minipage}[t]{\linewidth}
      \renewcommand{\arraystretch}{1.35}
      \setlength{\tabcolsep}{6pt}
      \centering
\begin{tabular}{@{}l|c@{}}
  \toprule
  \textbf{Method} & \textbf{State importance} \\
  \midrule
  Random & -- \\
  Uniform magnitude & $|\bar\lambda_i|\,\|\bar{\mathbf{B}}_i\|\,\|\mathbf{C}_i\|$ \\
  Global magnitude & $|\bar\lambda_i|\,\|\bar{\mathbf{B}}_i\|\,\|\mathbf{C}_i\|$ \\
  LAMP & $\displaystyle \frac{|\bar\lambda_i|^{2}\,\|\bar{\mathbf{B}}_i\|^{2}\,\|\mathbf{C}_i\|^{2}}{\sum_{j\le i} |\bar\lambda_j|^{2}\,\|\bar{\mathbf{B}}_j\|^{2}\,\|\mathbf{C}_j\|^{2}}$ \\
  Uniform $\mathcal{H}_\infty$ & $\displaystyle \frac{\|\mathbf{C}_i\|^{2}\,\|\bar{\mathbf{B}}_i\|^{2}}{(1-|\bar\lambda_i|)^{2}}$ \\
  Global $\mathcal{H}_\infty$ & $\displaystyle \frac{\|\mathbf{C}_i\|^{2}\,\|\bar{\mathbf{B}}_i\|^{2}}{(1-|\bar\lambda_i|)^{2}}$ \\
  LAST & $\displaystyle \frac{\frac{\|\mathbf{C}_i\|^{2}\,\|\bar{\mathbf{B}}_i\|^{2}}{(1-|\bar\lambda_i|)^{2}}}{\sum_{j\le i} \frac{\|\mathbf{C}_j\|^{2}\,\|\bar{\mathbf{B}}_j\|^{2}}{(1-|\bar\lambda_j|)^{2}}}$ \\
  \midrule
  \textbf{AIRE-Prune (ours)} & $\displaystyle \frac{\frac{\|\mathbf{C}_i\|^{2}\,\|\bar{\mathbf{B}}_i\|^{2}}{1-|\bar\lambda_i|^{2}}}{\sum_{j\le i} \frac{\|\mathbf{C}_j\|^{2}\,\|\bar{\mathbf{B}}_j\|^{2}}{1-|\bar\lambda_j|^{2}}}$ \\
  \bottomrule
\end{tabular}

      \vspace{3pt}
      \subcaption{State-importance definitions}
      \label{fig:state_importance_table}
    \end{minipage}
    &
    \begin{minipage}[c]{\linewidth}
      \centering
      \includegraphics[width=\linewidth]{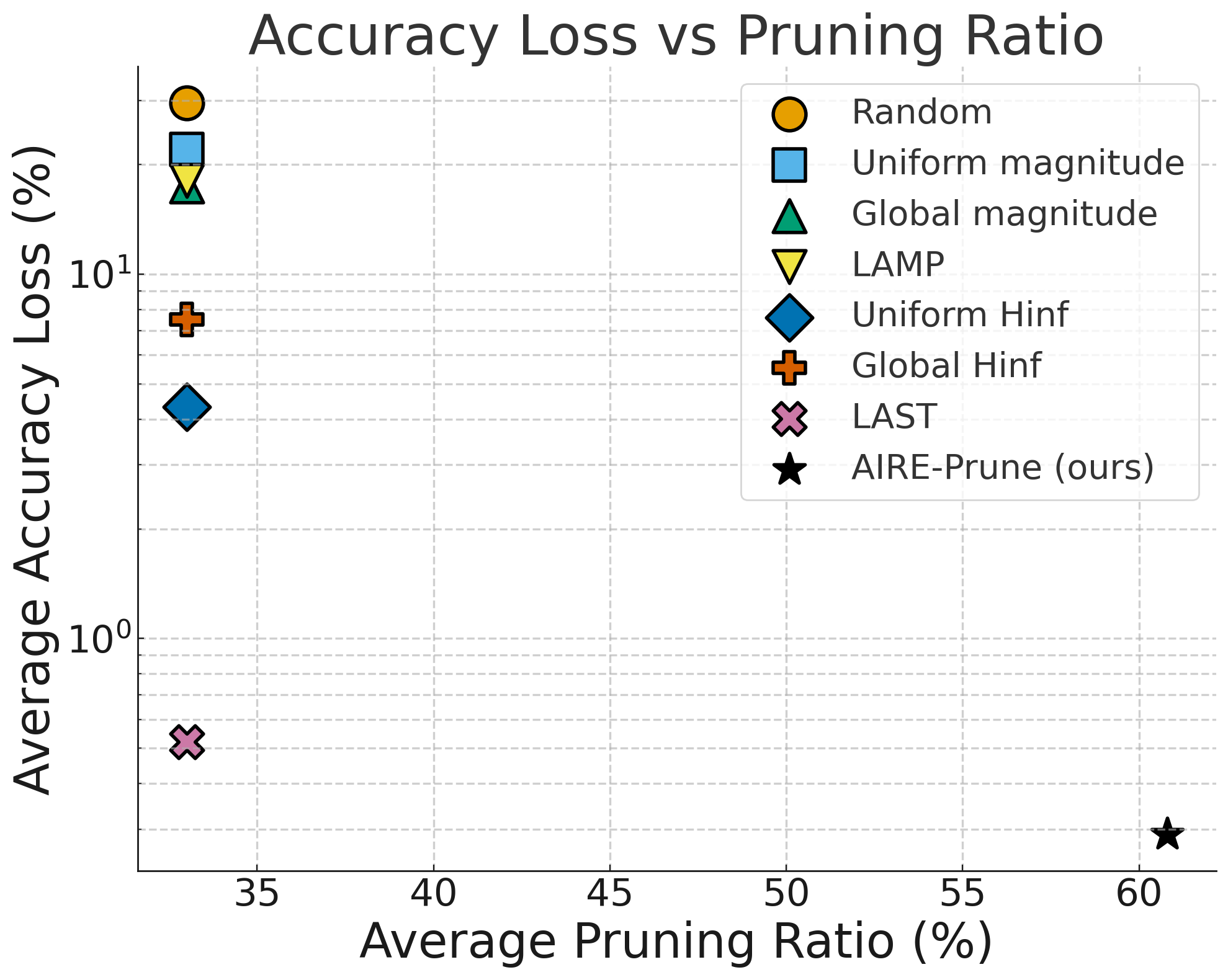}
      \vspace{3pt}
      \subcaption{Accuracy loss vs.\ pruning ratio.}
      \label{fig:pruning_tradeoff}
    \end{minipage}
  \end{tabular}

  \caption{
  Comparison across baselines}
  \label{fig:table_plus_image}
\end{figure*}

\section{Experiments}
\label{sec:experiments}
\subsection{Setup}
\paragraph{Models and tasks.}
We evaluate pruning on the \textbf{S5} (MIMO) SSM \citep{smith2023simplified} across the six Long Range Arena (LRA) tasks~\citep{tay2021lra} and the \textbf{Speech Commands} 35-way keyword recognition benchmark~\citep{warden} (sequence length 16{,}000). All runs use a single \textbf{NVIDIA H100} (40GB/80GB) GPU with the released S5 training and inference configuration. Unless noted otherwise, we report one-shot pruning (no retraining) and evaluate accuracy by freezing all parameters.

\paragraph{Baselines.}
We compare against six pruning baselines which includes random, uniform magnitude, global magnitude, LAMP \citep{LAMP}, Uniform $H_{\infty}$ \citep{LAST}, Global $H_{\infty}$ \citep{LAST}, LAST \citep{LAST}. Figure \ref{fig:state_importance_table} shows the state importance metric (scoring method) for the baselines.

\paragraph{Additional architectures.}
Beyond S5, we also evaluate \textsc{AIRE-Prune} on \textbf{S4D} and \textbf{Mamba (S6)} backbones to test generalization across SSM families. Task-wise pruning/accuracy numbers for these models on LRA are summarized in Table~\ref{tab:s4_lra_taskwise_filled}.

\paragraph{Pruning ratios.}
For methods that allocate layerwise budgets (Global $H_\infty$, AIRE-Prune), we report \emph{average} pruning ratios across layers.
We sweep pruning ratios from $\{0\%$ to $100\%\}$ with step size of $10\%$ for Uniform $H_{\infty}$ \citep{LAST}, Global $H_{\infty}$ \citep{LAST}, LAST \citep{LAST} and AIRE-Prune (our work) as in Figure \ref{fig:Prune_Sweep}.
A ratio of \(100\%\) would leave a single complex-conjugate pair per layer.

\begin{figure*}[t]
  \centering
  \includegraphics[width=0.95\linewidth]{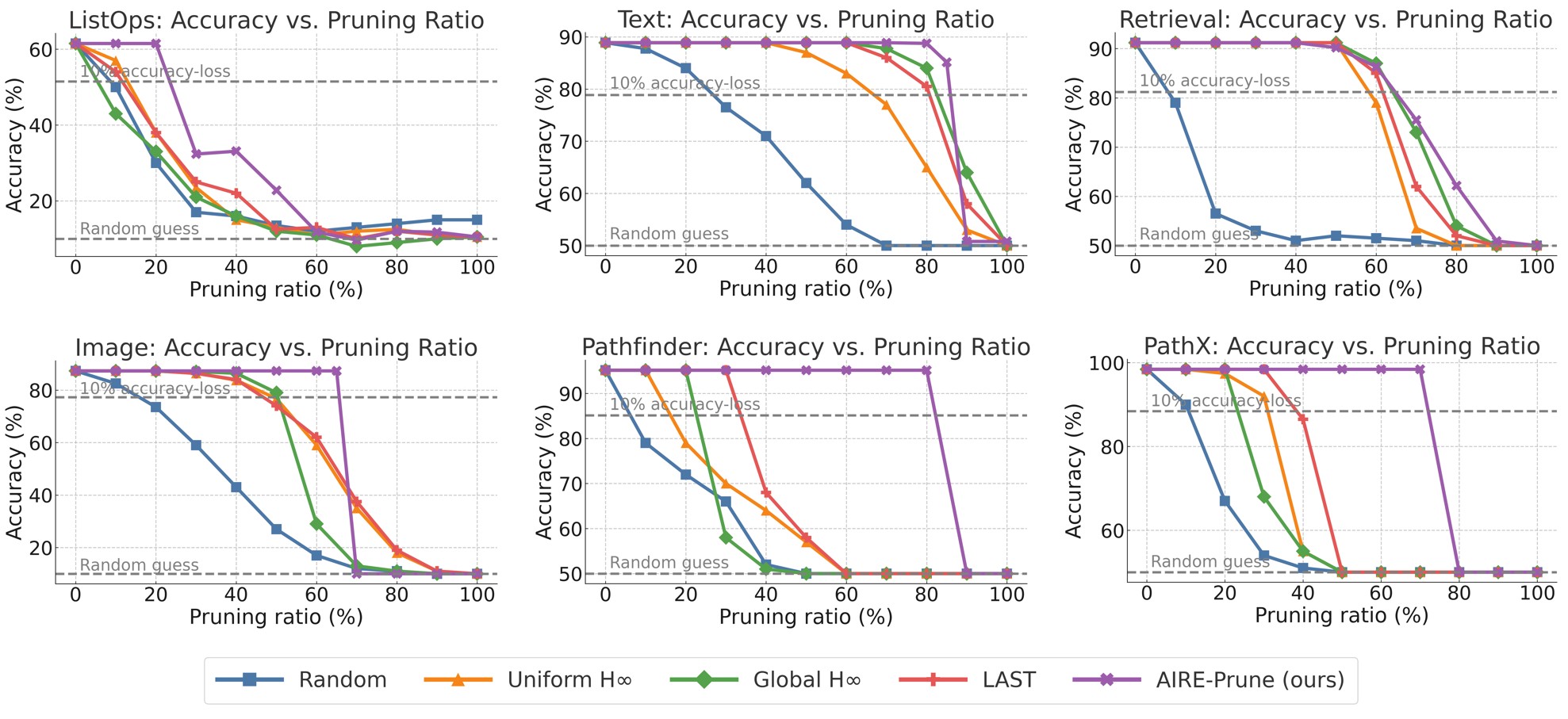}
  \caption{Trade-off curves between pruning ratio and accuracy for pruned S5 models across tasks in the LRA benchmark. Baselines LAST, Uniform $H_\infty$, Global $H_\infty$ are refered from \citep{LAST}}
  \label{fig:Prune_Sweep}
\end{figure*}

\begin{table*}[t]
\centering
\caption{\textbf{Task-wise accuracy on LRA and Speech Commands with S5.} 
Pruning ratio (Prun.) and post-pruning accuracy (Acc.). Numbers for
Full/Uniform $H_\infty$/Global $H_\infty$/LAST \citep{LAST} are compared with AIRE-Prune (ours).}
\label{tab:s5_lra_taskwise_filled}
\small
\setlength{\tabcolsep}{2.6pt}
\renewcommand{\arraystretch}{1.05}
\begin{tabular}{l| cc cc cc cc cc cc cc cc cc}
\toprule
 & \multicolumn{2}{c}{\shortstack[c]{\textbf{ListOps}\\(2{,}048)}} &
   \multicolumn{2}{c}{\shortstack[c]{\textbf{Text}\\(4{,}096)}} &
   \multicolumn{2}{c}{\shortstack[c]{\textbf{Retrieval}\\(4{,}000)}} &
   \multicolumn{2}{c}{\shortstack[c]{\textbf{Image}\\(1{,}024)}} &
   \multicolumn{2}{c}{\shortstack[c]{\textbf{Pathfinder}\\(1{,}024)}} &
   \multicolumn{2}{c}{\shortstack[c]{\textbf{Path-X}\\(16{,}384)}} &
   \multicolumn{2}{c}{\shortstack[c]{\textbf{Speech}\\(16{,}000)}} \\
\cmidrule(lr){2-3}\cmidrule(lr){4-5}\cmidrule(lr){6-7}\cmidrule(lr){8-9}%
\cmidrule(lr){10-11}\cmidrule(lr){12-13}\cmidrule(lr){14-15}
\textbf{Model} &
\textbf{Prun.} & \textbf{Acc.} &
\textbf{Prun.} & \textbf{Acc.} &
\textbf{Prun.} & \textbf{Acc.} &
\textbf{Prun.} & \textbf{Acc.} &
\textbf{Prun.} & \textbf{Acc.} &
\textbf{Prun.} & \textbf{Acc.} &
\textbf{Prun.} & \textbf{Acc.} \\
\midrule
\textbf{S5 Full}
 & 0\% & 61.48 & 0\% & 88.88 & 0\% & 91.20 &
   0\% & 87.30 & 0\% & 95.15 & 0\% & 98.41 &
   0\% & 96.43  \\
\addlinespace[2pt]
Uniform $H_\infty$ 
 & 0\% & 61.48 & 60\% & 82.49 & 50\% & 90.29 &
   30\% & 86.45 & 30\% & 71.38 & 30\% & 90.90 &
   20\% & 96.20  \\
Global $H_\infty$ 
 & 0\% & 61.48 & 60\% & 88.56 & \textbf{50\%} & \textbf{90.93} &
   30\% & 87.04 & 30\% & 57.20 & 30\% & 69.21 &
   20\% & 96.21  \\
LAST 
 & 0\% & 61.48 & 60\% & 88.52 & 50\% & 90.42 &
   30\% & 86.34 & 30\% & 94.45 & 30\% & 97.95 &
   20\% & 96.31  \\
\midrule
\textbf{\shortstack[l]{AIRE-Prune\\(ours)}}
 & \textbf{20\%} & \textbf{61.48} &
   \textbf{80\%} & \textbf{88.24} &
   50\% & 90.11 &
   \textbf{65\%} & \textbf{87.30} &
   \textbf{80\%} & \textbf{95.15} &
   \textbf{70\%} & \textbf{98.41} &
   \textbf{45\%} & \textbf{96.40}  \\
\bottomrule
\end{tabular}
\end{table*}

\begin{table*}[t]
\centering
\caption{\textbf{Task-wise accuracy on LRA with S4D and Mamba.}
Pruning ratio (Prun.) and post-pruning accuracy (Acc.).
Numbers for Full/Uniform $H_\infty$/Global $H_\infty$/LAST \citep{LAST}
(S4D) are compared with AIRE-Prune (ours).}
\label{tab:s4_lra_taskwise_filled}
\small
\setlength{\tabcolsep}{2.8pt}
\renewcommand{\arraystretch}{1.05}
\begin{tabular}{l| cc cc cc cc }
\toprule
 & \multicolumn{2}{c}{\shortstack[c]{\textbf{ListOps}\\(2{,}048)}} &
   \multicolumn{2}{c}{\shortstack[c]{\textbf{Text}\\(4{,}096)}} &
   \multicolumn{2}{c}{\shortstack[c]{\textbf{Retrieval}\\(4{,}000)}} &
   \multicolumn{2}{c}{\shortstack[c]{\textbf{Image}\\(1{,}024)}} \\
\cmidrule(lr){2-3}\cmidrule(lr){4-5}\cmidrule(lr){6-7}\cmidrule(lr){8-9}
\textbf{Model} &
\textbf{Prun.} & \textbf{Acc.} &
\textbf{Prun.} & \textbf{Acc.} &
\textbf{Prun.} & \textbf{Acc.} &
\textbf{Prun.} & \textbf{Acc.} \\
\midrule
\textbf{S4D Full}
 & 0\%  & 56.42 & 0\%  & 86.40 & 0\%  & 90.46 &
   0\%  & 77.02  \\

Uniform $H_\infty$ 
 & 10\% & 55.82 & 80\% & 86.02 & 60\% & 89.87 &
   0\%  & 77.02  \\

Global $H_\infty$ 
 & 10\% & 49.95 & 80\% & 86.20 & 60\% & 89.84 &
   0\%  & 77.02  \\

LAST 
 & 10\% & 56.27 & 80\% & 85.95 & 60\% & 89.46 &
   0\%  & 77.02  \\

\addlinespace[2pt]
\textbf{AIRE-Prune (ours)}
 & \textbf{35\%} & \textbf{56.07} &
   \textbf{90\%} & \textbf{85.83} &
   -\% & - &
   \textbf{40\%} & \textbf{76.32} \\
\midrule
\textbf{Mamba Full}
 & 0\% & 38.02 & 0\% & 82.98 & 0\% & 72.14 &
   0\% & 69.82  \\

\textbf{LAST}
 & -\% & - & 20\% & 81.32 & 45\% & 71.37 &
   45\% & 69.13 \\

\textbf{AIRE-Prune (ours)}
 & -\% & - &
   \textbf{45\%} & \textbf{81.19} &
   \textbf{60\%} & \textbf{71.45} &
   \textbf{80\%} & \textbf{68.97} \\
\bottomrule
\end{tabular}
\end{table*}

\begin{table}[t]
\centering
\caption{\textbf{Performance (S5)}: pruning ratios and resulting inference speedups and parameter-memory reductions for S5}
\label{s5_performance}
\small
\setlength{\tabcolsep}{6pt}
\begin{tabular}{lcccccc}
\toprule
\textbf{Metric $\downarrow$ / Dataset $\rightarrow$} 
& \textbf{ListOps} 
& \textbf{Text } 
& \textbf{Retrieval} 
& \textbf{Image} 
& \textbf{Pathfinder} 
& \textbf{Path-X} \\
\midrule
\textbf{Pruning ratio} 
& 20\% & 80\% & 50\% & 65\% & 80\% & 70\% \\

\textbf{Inference speedup ($\times$)} 
& 1.23$\times$ & 2.49$\times$ & 1.82$\times$ & 1.93$\times$ & 2.82$\times$ & 2.86$\times$ \\

\textbf{Model reduction (\%)} 
& 19.3\% & 58.7\% & 38.3\% & 45.3\% & 64.7\% & 64.2\% \\
\bottomrule
\end{tabular}

\label{tab:aireprune_speed_mem}
\end{table}

\subsection{Analysis}

\paragraph{Long Range Arena (S5).}
We evaluate {AIRE-Prune} on the LRA suite~\citep{tay2021lra}, which probes long-range
dependencies with sequence lengths from 1{,}024 to 16{,}384.  In all runs we apply a single
global threshold to the {prefix-normalized} AIRE scores and prune {without} any
retraining.  Table~\ref{tab:s5_lra_taskwise_filled} reports the post-pruning accuracies.

\paragraph{Speech Commands (S5).}
On Speech Commands, \textsc{AIRE-Prune} achieves \textbf{45\%} state pruning while essentially preserving accuracy (96.40\% vs.\ 96.43\% for the full S5 model; Table~\ref{tab:s5_lra_taskwise_filled}). This mirrors the high-compressibility regime observed on several LRA tasks and indicates that AIRE’s energy-based ranking transfers cleanly from synthetic long-range benchmarks to real audio classification.

\textbf{Aggregate outcome.} Averaged across tasks, {AIRE-Prune} removes \textbf{60.8\%} of
states with only \textbf{0.29\,pp} accuracy drop (Figure~\ref{fig:pruning_tradeoff}, last row). In the common
“$\le$1\,pp loss” regime, compressibility is task-dependent: \emph{Text}, \emph{Pathfinder}, and
\emph{Path-X} sustain \textbf{80\%}, \textbf{80\%}, and \textbf{70\%} pruning, respectively, while
matching or nearly matching full accuracy (Text: 88.24 vs.\ 88.88; Pathfinder: 95.15 vs.\ 95.15;
Path-X: 98.41 vs.\ 98.41; Table~\ref{tab:s5_lra_taskwise_filled}). \emph{Retrieval} and
\emph{Image} tolerate \textbf{50-65\%} pruning at $\le$1\,pp loss (Retrieval: 90.11 vs.\ 90.93;
Image: 87.30 vs.\ 87.30). \emph{ListOps} is sensitive, admitting only \textbf{20\%}
pruning before degradation; at this operating point, accuracy is preserved (61.48 vs.\ 61.48).

\textbf{Comparison to baselines.} At high compression (right half of each panel in
Fig.~\ref{fig:Prune_Sweep}), \textsc{AIRE-Prune} dominates \emph{Uniform/Global}
$\mathcal{H}_\infty$ magnitude rules and outperforms \textsc{LAST} under the $\le$1\,pp loss
criterion. This is mirrored in the \emph{average accuracy loss} of Figure~\ref{fig:pruning_tradeoff}:
\textsc{AIRE-Prune} (0.29\,pp) improves over \emph{Uniform} $\mathcal{H}_\infty$ (4.32\,pp),
\emph{Global} $\mathcal{H}_\infty$ (7.51\,pp), and \textsc{LAST} (0.52\,pp), while achieving a much
higher \emph{average} pruning ratio (60.8\% vs.\ 33\% for those methods; $\approx\!1.8\times$ higher).

\paragraph{Generalization to S4D and Mamba.}
Table~\ref{tab:s4_lra_taskwise_filled} shows that \textsc{AIRE-Prune} also yields competitive pruning/accuracy trade-offs on \textbf{S4D} and \textbf{Mamba}. On S4D, AIRE attains up to \textbf{90\%} pruning on \emph{Text} and \textbf{40\%} on \emph{Image} while closely tracking full-model accuracy. On Mamba, AIRE safely prunes \textbf{45\%} of states on \emph{Text}, \textbf{60\%} on \emph{Retrieval}, and \textbf{80\%} on \emph{Image} with negligible degradation. These results suggest that the asymptotic energy ranking is not tied to a specific SSM parameterization and can serve as a drop-in pruning rule across modern state-space architectures.

\paragraph{Superiority over state of the art.}
Against the strongest baselines (Uniform/Global $ \mathcal{H}_\infty $ and \textsc{LAST}), \textsc{AIRE-Prune} delivers strictly larger \emph{safe} no-retrain budgets (defined at $\le 1$\,pp accuracy drop) on five of six LRA tasks while matching or exceeding accuracy. From Table~\ref{tab:s5_lra_taskwise_filled}, AIRE’s safe pruning exceeds the best baseline by \textbf{+20\,pp} on \emph{ListOps}, \textbf{+20\,pp} on \emph{Text}, \textbf{+35\,pp} on \emph{Image}, \textbf{+50\,pp} on \emph{Pathfinder}, and \textbf{+40\,pp} on \emph{Path-X}; on \emph{Retrieval}, the listed AIRE point at 50\% misses the $\le 1$\,pp rule by only 0.09\,pp, yielding no advantage at that specific sample yet remaining competitive with the best baseline. We attribute this margin to AIRE’s time-accumulated energy lens and prefix normalization, which together create strong head-tail separation and enable a single global threshold to keep essential modes intact and collapse low-energy layers, a behavior not observed in magnitude- or peak-gain-based scoring.

\paragraph{Inference efficiency and model size.}
We next quantify the system-level impact of pruning-by-removal. For each LRA task, we transfer the kept states into a dimension-reduced S5 model and measure inference throughput and parameter reduction on an NVIDIA H100 GPU (Table~\ref{tab:aireprune_speed_mem}). Across tasks, AIRE’s pruning schedules yield \textbf{1.2$\times$--2.9$\times$} speedups and \textbf{19\%--65\%} reductions in parameter count, with larger gains on high-pruning regimes such as \emph{Pathfinder} and \emph{Path-X}. This confirms that aggressive state pruning translates into meaningful computational and memory savings, not just abstract sparsity. Similar analysis for Mamba has been mentioned in Appendix \ref{Efficiency_mamba}

\paragraph{Accuracy drop vs.\ pruning ratio.}
Figure~\ref{fig:Prune_Sweep} exhibits a pronounced \emph{elbow/step} for \emph{AIRE-Prune}:
accuracy remains near the full model up to a high pruning threshold, then drops sharply;
in contrast, baselines degrade \emph{smoothly} as pruning increases. We interpret the step as
evidence that AIRE’s ranking separates \emph{important} from \emph{unimportant} states with a wide margin,
while smooth curves indicate that pruning mixes both groups throughout the sweep.
\emph{Takeaway.} The step-like profile is robust only when using AIRE energies \emph{and} prefix normalization,
which together keep the high-margin \emph{head} intact and prune contiguous \emph{tails}.
Baselines show smooth decay because their scores offer weaker head-tail separation, so valuable states
are progressively removed throughout the sweep.

\paragraph{Layer-wise pruning profile.}
\begin{figure*}[t]
  \centering
  \includegraphics[width=0.95\linewidth]{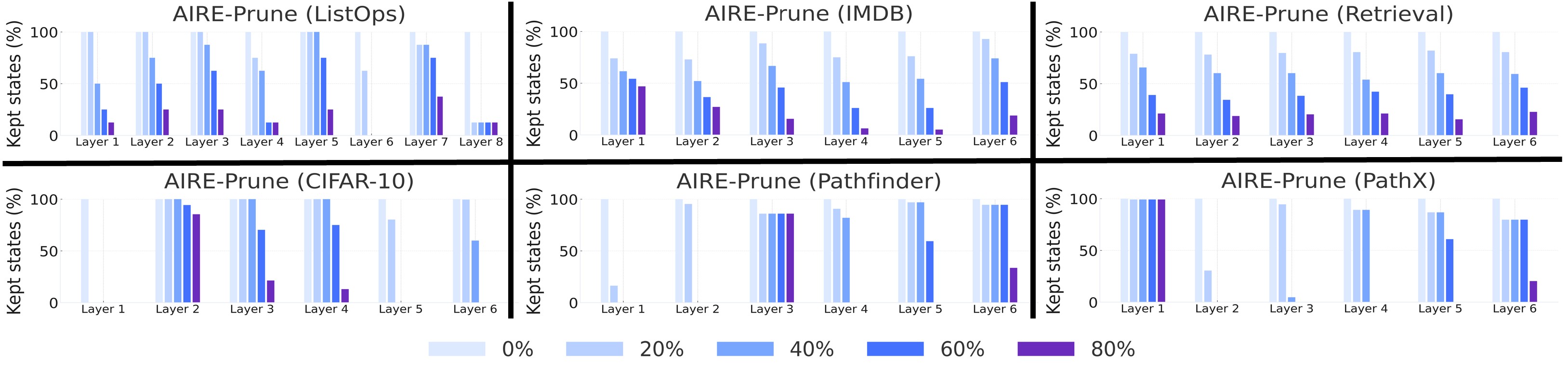}
  \caption{Layer-wise pruning ratio as we increase the global pruning threshold for S5 models across tasks in the LRA benchmark.}
  \label{fig:layerprofile}
\end{figure*}
Figure~\ref{fig:layerprofile} shows kept-state percentages per layer as the global ratio increases,
revealing task-specific structure. On \emph{Path-X} and \emph{Pathfinder}, AIRE progressively thins
later layers and, at high ratios, can \emph{remove entire layers} whose states fall below the global
threshold, turning fine-grained sparsity into block-level deletions. This indicates that our method can help achieve low-latency SSMs, as we are able to prune layers without losing accuracy. For \emph{Text} and
\emph{Retrieval}, earlier layers retain larger fractions while mid/late layers contribute most of the
budget, consistent with encoder-style front-end processing. \emph{Image} exhibits more even shrinkage
until its elbow, after which a few layers collapse. \emph{ListOps} shows non-trivial contribution from
all layers even at small ratios, explaining its early elbow and our conservative cap around 20\%.
This heterogeneous, layer-selective behavior contrasts with reported layer pruning profile of {LAST} \citep{LAST}, which tends to have more uniform pruning among layers at similar global budgets. 

\section{Conclusion}
We introduced \textsc{AIRE-Prune}, a training-free pruning criterion for diagonal (or diagonalized)
state-space models that ranks modes by \emph{asymptotic output energy},
combined with \emph{prefix normalization} that puts scores from different layers on a common scale.
With a single global threshold and \emph{no retraining}, \textsc{AIRE-Prune} achieves strong
compression/accuracy trade-offs across the LRA benchmark.

\paragraph{Summary of findings.}
Averaged over tasks for S5, \textsc{AIRE-Prune} removes \textbf{60.8\%} of states while incurring only
\textbf{0.29} percentage points of accuracy loss. Under a $\le$1~pp loss budget, it safely prunes
\textbf{80\%} on \emph{Text} and \emph{Pathfinder}, \textbf{70\%} on \emph{Path-X},
\textbf{65\%} on \emph{Image}, \textbf{50\%} on \emph{Retrieval}, and even \textbf{20\%} on the
often-labeled “incompressible’’ \emph{ListOps}.
Across the board, AIRE-Prune dominates across all the baselines.


\paragraph{Practical implications.}
Because scores are computed once and applied globally, practitioners can:
(i) set a target budget without per-layer tuning; (ii) sweep to the elbow and stop just before it to
maximize savings at fixed accuracy; and (iii) realize system-level gains by deleting entire layers
whenever their states fall below threshold, converting fine-grained sparsity into
\emph{block-structured} reductions that are easier to deploy on hardware.



\paragraph{Future Work.} AIRE\textendash-Prune can be analyzed on language datasets for \emph{input-selective} models like Mamba, where state matrices are input dependent. Pruning can be thought of estimating energy under gating dynamics (e.g., Monte-Carlo over gates) for robust state scoring and applying to the AIRE-Prune closed form solution.

\bibliography{iclr2026_conference}
\bibliographystyle{iclr2026_conference}

\newpage
\appendix
\section{Appendix}

\subsection{Mathematical Proofs}
\label{subsec:math_background}

\subsubsection{Vector/Matrix Inner Products and Norms}
\paragraph{Euclidean space.}
For $\vx\in\mathbb{C}^n$, the standard inner product and induced $2$-norm are
\[
\langle \vx,\vy\rangle := \vx^*\vy,\qquad \|\vx\|_2 := \sqrt{\vx^*\vx}.
\]
The Cauchy--Schwarz inequality gives $|\langle \vx,\vy\rangle| \le \|\vx\|_2\,\|\vy\|_2$.

\paragraph{Frobenius (Hilbert--Schmidt) structure.}
For $\mA,\mB\in\mathbb{C}^{m\times n}$, the Frobenius inner product and norm are
\[
\langle \mA,\mB\rangle_F := \mathrm{tr}(\mA^*\mB),\qquad
\|\mA\|_F := \sqrt{\mathrm{tr}(\mA^*\mA)} = \Big(\sum_{i,j}|a_{ij}|^2\Big)^{1/2}.
\]
This norm is:
(i) \emph{unitarily invariant} (\,$\|\mU\mA\mV\|_F=\|\mA\|_F$ for unitary $\mU,\mV$\,),
(ii) \emph{compatible} with vectorization (\,$\|\mA\|_F=\|\mathrm{vec}(\mA)\|_2$\,),
and (iii) \emph{submultiplicative} with respect to the spectral norm:
$\|\mA\mB\|_F \le \|\mA\|_2\,\|\mB\|_F$ and $\|\mA\mB\|_F \le \|\mA\|_F\,\|\mB\|_2$.

\paragraph{Rank-1 outer products.}
Given $\vu\in\mathbb{C}^{m}$ and $\vv\in\mathbb{C}^{n}$, the outer product
$\vu\vv^* \in \mathbb{C}^{m\times n}$ has entries $(\vu\vv^*)_{ij}=u_i\,\overline{v_j}$.

\subsubsection{Identity for Rank-1 Terms}
\label{subsec:rank1_identity}
\begin{lemma}[Frobenius norm of a rank-1 outer product]
For $\vu\in\mathbb{C}^{m}$ and $\vv\in\mathbb{C}^{n}$,
\[
\|\vu\vv^*\|_F^2 = \|\vu\|_2^2\,\|\vv\|_2^2.
\]
\end{lemma}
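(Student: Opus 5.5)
The plan is to compute directly from the entrywise definition of the Frobenius norm recalled above, $\|\mA\|_F^2=\sum_{i,j}|a_{ij}|^2$, together with the entry formula for rank-1 outer products, $(\vu\vv^*)_{ij}=u_i\,\overline{v_j}$. The only facts needed about complex numbers are that the modulus is multiplicative and invariant under conjugation: $|u_i\overline{v_j}|=|u_i|\,|v_j|$.

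\textbf{Step 1: expand the norm entrywise.} Write
\[
\|\vu\vv^*\|_F^2=\sum_{i=1}^{m}\sum_{j=1}^{n}|u_i|^2|v_j|^2 .
\]

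\textbf{Step 2: factor the double sum.} Because the summand is a product of a term depending only on $i$ and a term depending only on $j$, the finite double sum splits:
\[
\Big(\sum_{i=1}^m|u_i|^2\Big)\Big(\sum_{j=1}^n|v_j|^2\Big)=\|\vu\|_2^2\,\|\vv\|_2^2,
\]
using $\|\vx\|_2^2=\vx^*\vx=\sum_k|x_k|^2$.

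\textbf{Alternative route via the trace.} As a cross-check one can instead use $\|\mA\|_F^2=\mathrm{tr}(\mA^*\mA)$ with $\mA=\vu\vv^*$. Then $\mA^*\mA=\vv\vu^*\vu\vv^*=\|\vu\|_2^2\,\vv\vv^*$, and $\mathrm{tr}(\vv\vv^*)=\vv^*\vv=\|\vv\|_2^2$ by cyclicity of the trace.

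\textbf{Where care is needed.} No step is a genuine obstacle; the argument is a two-line computation. The only point requiring attention is the conjugate transpose versus plain transpose. The main text writes $\vu\vv^\top$ with complex $\mB_{i,:}$, so I would remark that $|\overline{v_j}|=|v_j|$ makes the identity hold equally for $\vu\vv^\top$. This justifies applying the lemma to $\mH_t^{(i)}=\lambda_i^t\,\mC_{:,i}\mB_{i,:}$ in the derivation of $E_i$. There the scalar factor contributes $|\lambda_i|^{2t}$ by homogeneity of the norm, $\|c\mA\|_F=|c|\,\|\mA\|_F$.
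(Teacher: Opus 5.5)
Your proof is correct and takes the same approach as the paper: expand $\|\vu\vv^*\|_F^2$ entrywise as $\sum_{i,j}|u_i\overline{v_j}|^2$, then factor the double sum into $\|\vu\|_2^2\,\|\vv\|_2^2$. Your trace-based cross-check is valid, and so is your remark that the identity holds equally for $\vu\vv^\top$ (the form the main text actually applies to complex $\mB_{i,:}$); neither appears in the paper's proof.
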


\begin{proof}
\[
\|\vu\vv^*\|_F^2
= \sum_{i,j} |u_i\,\overline{v_j}|^2
= \Big(\sum_i |u_i|^2\Big)\Big(\sum_j |v_j|^2\Big)
= \|\vu\|_2^2\,\|\vv\|_2^2.
\]
\end{proof}

\subsubsection{``Energy'' Interpretation of $\|\mH_t\|_F^2$}
\label{subsec:energy_equivalence}
For an LTI system with $\vy_t=\sum_{k\ge 0} \mH_k\,\vu_{t-k}$ and a zero-mean white input
$\vu_t\sim \mathcal{CN}(0,\mI_h)$ independent across $t$,
the instantaneous output power contributed by the $k$-lag kernel is
\[
\mathbb{E}\,\|\mH_k \vu_{t-k}\|_2^2
= \mathbb{E}\,\mathrm{tr}(\vu_{t-k}^* \mH_k^* \mH_k \vu_{t-k})
= \mathrm{tr}\!\big(\mH_k^* \mH_k\,\mathbb{E}[\vu_{t-k}\vu_{t-k}^*]\big)
= \mathrm{tr}(\mH_k^* \mH_k)
= \|\mH_k\|_F^2.
\]
Thus, $\|\mH_k\|_F^2$ equals the expected output energy (power for unit-variance inputs) contributed by the $k$-th impulse slice. The same holds componentwise for $\mH_t^{(i)}$,
so $\|\mH_t^{(i)}\|_F^2$ quantifies the energy carried by the $i$-th mode at lag $t$.
Moreover, since $\|\mH_t\|_F^2=\sum_i \|\mH_t^{(i)}\|_F^2$ when the rank-1 terms are mutually orthogonal in the Frobenius inner product, or more generally
$\|\mH_t\|_F^2=\mathrm{tr}(\mH_t^*\mH_t)$ always,
this norm provides a natural, additive energy accounting across modes and lags.

\subsubsection{Connection to the $\mathcal{H}_2$ Norm (Total Energy)}
\label{subsec:H2}
For a stable MIMO LTI system with transfer matrix $\mG(z)$ and impulse sequence $\{\mH_t\}$,
\[
\|\mG\|_{\mathcal{H}_2}^2
= \frac{1}{2\pi}\!\int_{0}^{2\pi} \mathrm{tr}\big(\mG(e^{j\omega})^* \mG(e^{j\omega})\big)\,d\omega
= \sum_{t=0}^{\infty} \|\mH_t\|_F^2,
\]
by Parseval/Plancherel (discrete-time). Hence the Frobenius-squared of impulse slices sums to the \emph{total} output energy for white inputs. This makes $\|\mH_t\|_F^2$ a natural energy density over lags.

\subsubsection{Real vs.\ Complex Data; Bidirectionality}
\label{subsec:complex_real}
If the model uses complex pairs to represent real-valued dynamics, terms appear as conjugate pairs whose sum is real. Use $\vv^*$ (not $\vv^\top$) in complex algebra; in purely real settings, $\vv^*=\vv^\top$. For bidirectional SSMs, one can form an augmented LTI with block-diagonal forward/backward dynamics; the energy accounting above applies componentwise and adds.


\section{From Energy-Based Pruning to Worst-Case Certificates}

\subsection{Scope and promise (what this appendix delivers)}
\begin{itemize}
\item \textbf{What AIRE-Prune proposes.} An \emph{energy-based}, post-training, layer-adaptive pruning rule for diagonal (or diagonalizable) state-space layers that uses a single global threshold.
\item \textbf{What we add here.} A precise, \emph{worst-case} ($H_\infty$) \emph{error certificate} that complements the original \emph{typical-case} (energy/$H_2$) rationale, and an end-to-end distortion bound for a residual stack with Lipschitz wrappers.
\item \textbf{Why it matters.} You keep the same AIRE ranking/thresholding, but now with a computable, auditable guarantee; we also position the result against LAST \citep{LAST} and highlight the mathematical differences.
\end{itemize}

\subsection{Background: diagonal SSM layers and energy (no prior knowledge assumed)}

\paragraph{Diagonal/diagonalized SSM layer.}
Let $\vx_{k+1}=\mLambda \vx_k + \mB \vu_k$ and $\vy_k = \mC \vx_k$, with $\mLambda=\mathrm{diag}(\lambda_1,\ldots,\lambda_n)$, $|\lambda_i|<1$ for stability. The impulse response and frequency response are
\[
\mH_t = \mC \mLambda^t \mB \quad (t=0,1,2,\dots), 
\qquad 
\mG(e^{j\omega}) = \mC(\mI-\mLambda e^{-j\omega})^{-1}\mB 
= \sum_{i=1}^n \frac{\mC_{:,i} \mB_{i,:}}{1-\lambda_i e^{-j\omega}}.
\]
Each mode $i$ contributes a rank-1 impulse slice $\mH^{(i)}_t=\mC_{:,i}\lambda_i^t \mB_{i,:}$.

\paragraph{Energy ($H_2$) of a layer and a mode.}
Parseval implies
\[
\|\Sigma\|_{\mathrm{energy}}^2 
= \sum_{t=0}^\infty \|\mH_t\|_F^2
= \frac{1}{2\pi}\int_0^{2\pi}\!\!\|\mG(e^{j\omega})\|_F^2\,d\omega .
\]
Using $\|\vu\vv^\ast\|_F^2=\|\vu\|_2^2\|\vv\|_2^2$ and the geometric series,
\[
E_i \,\triangleq\, \sum_{t=0}^{\infty} \|\mH^{(i)}_t\|_F^2 
= \frac{\|\mC_{:,i}\|_2^2\,\|\mB_{i,:}\|_2^2}{1-|\lambda_i|^2},
\qquad
\|\Sigma\|_{\mathrm{energy}}^2 = \sum_{i=1}^n E_i .
\]
\emph{Intuition.} $E_i$ is the mode’s long-run output power (e.g., under unit white-noise inputs). If $\mB$ rows are normalized, then $E_i=\|\mC_{:,i}\|_2^2/(1-|\lambda_i|^2)$.

\subsection{AIRE-Prune: the algorithm}

\begin{enumerate}
\item \textbf{Per-mode energy.} Compute $E_i$ for each state in each layer.
\item \textbf{Sort \& prefix sums (per layer).} Sort $E_{(1)}\ge \cdots \ge E_{(n)}$; set $S(i)=\sum_{j\le i} E_{(j)}$.
\item \textbf{Prefix-normalized score.}
\vspace{-0.25em}
\[
s(i) \;=\; \frac{E_{(i)}}{S(i)+\varepsilon},\qquad 0<\varepsilon\ll 1.
\]
This score is monotone non-increasing in $i$; a single global threshold $\tau$ causes each layer to \emph{keep its longest prefix} and \emph{prune a contiguous tail}.
\item \textbf{Global threshold.} Concatenate all $s(i)$ across layers, choose $\tau$ for the desired global budget, and materialize kept/pruned indices per layer.
\end{enumerate}

\emph{Practical effect.} The prefix rule yields clear head-tail separation and, at higher thresholds, may delete entire low-energy layers, which helps wall-clock latency.

\subsection{Why discuss worst-case bounds if energy already works?}
The energy lens ($H_2$) explains typical/average behavior and aligns with many workloads. We therefore derive an $H_\infty$ \emph{certificate} \emph{from} the same energy quantities AIRE already computes, plus a single stability factor. The method remains unchanged; the mathematics provides assurance why AIRE-Prune works.

\subsection*{Why the bound should hold}
Sort a layer’s states by energy. Cumulative energy vs.\ kept states shows an elbow: most energy sits in a short head. Pruning the tiny tail barely changes the typical output. Our result shows the \emph{worst-case} change is also small unless pruned poles are extremely close to the unit circle. The stability margin enters through a single factor.

\subsection{Per-layer worst-case ($H_\infty$) certificate from energy tails}

\paragraph{Goal.}
We want to bound the worst-case (induced $\ell_2\!\to\!\ell_2$) error of a layer after pruning
a set of modes $\mathcal{T}$:
\[
\varepsilon \;\triangleq\; \|\mG-\widetilde{\mG}\|_\infty
= \sup_{\omega\in[0,2\pi)} \Big\|\,\sum_{i\in\mathcal{T}} \mG_i(e^{j\omega})\,\Big\|_2,
\qquad
\mG_i(e^{j\omega}) \;=\; \frac{\mC_{:,i}\mB_{i,:}}{1-\lambda_i e^{-j\omega}}.
\]

\paragraph{Step 1: Triangle inequality under the $H_\infty$ norm.}
The induced (operator) $2$-norm satisfies $\|\mA+\mB\|_2\le \|\mA\|_2+\|\mB\|_2$.
Taking the supremum over $\omega$ preserves the inequality:
\[
\begin{aligned}
\varepsilon
&= \sup_{\omega}\Big\| \sum_{i\in\mathcal{T}} \mG_i(e^{j\omega}) \Big\|_2
\overset{(\text{triangle})}{\le}
\sup_{\omega}\sum_{i\in\mathcal{T}} \|\mG_i(e^{j\omega})\|_2
\\
&\overset{(\sup\,\text{subadditivity})}{\le}
\sum_{i\in\mathcal{T}} \sup_{\omega} \|\mG_i(e^{j\omega})\|_2
= \sum_{i\in\mathcal{T}} \|\mG_i\|_\infty.
\end{aligned}
\]

Hence,
\begin{equation}
\label{eq:sum-of-modes}
\varepsilon \;\le\; \sum_{i\in\mathcal{T}} \|\mG_i\|_\infty.
\end{equation}

\paragraph{Step 2: Per-mode $H_\infty$ envelope.}
Write $\mG_i(e^{j\omega}) = \dfrac{\mC_{:,i}\mB_{i,:}}{1-\lambda_i e^{-j\omega}}$.
For a rank-1 matrix $\vu\vv^\ast$, $\|\vu\vv^\ast\|_2=\|\vu\|_2\|\vv\|_2$, so define
\[
\alpha_i \;\triangleq\; \|\mC_{:,i}\|_2\,\|\mB_{i,:}\|_2 \;=\; \|\mC_{:,i}\mB_{i,:}\|_2.
\]
For any $\omega$, the reverse triangle inequality gives
$|1-\lambda_i e^{-j\omega}|\ge 1-|\lambda_i|$. Therefore,
\begin{equation}
\label{eq:per-mode-peak}
\|\mG_i(e^{j\omega})\|_2
= \frac{\alpha_i}{|1-\lambda_i e^{-j\omega}|}
\le \frac{\alpha_i}{1-|\lambda_i|}
\quad\Rightarrow\quad
\|\mG_i\|_\infty \le \frac{\alpha_i}{1-|\lambda_i|}.
\end{equation}

\paragraph{Step 3: Express the peak envelope via the energy $E_i$.}
By definition of the per-mode energy (asymptotic impulse/white-noise energy),
\[
E_i \;=\; \frac{\|\mC_{:,i}\|_2^2\,\|\mB_{i,:}\|_2^2}{1-|\lambda_i|^2}
\;=\; \frac{\alpha_i^2}{1-|\lambda_i|^2}.
\]
Solving for $\alpha_i$ and substituting into \eqref{eq:per-mode-peak}:
\[
\alpha_i \;=\; \sqrt{E_i}\,\sqrt{1-|\lambda_i|^2}
\quad\Rightarrow\quad
\frac{\alpha_i}{1-|\lambda_i|}
= \sqrt{E_i}\,\frac{\sqrt{1-|\lambda_i|^2}}{1-|\lambda_i|}
= \sqrt{E_i}\,\sqrt{\frac{1+|\lambda_i|}{1-|\lambda_i|}}.
\]
Thus, for every pruned mode $i$,
\begin{equation}
\label{eq:per-mode-energy-envelope}
\|\mG_i\|_\infty \;\le\; \sqrt{E_i}\,\sqrt{\frac{1+|\lambda_i|}{1-|\lambda_i|}}.
\end{equation}

\paragraph{Step 4: Uniform ``stability factor'' over the pruned set.}
Let
\[
\rho \;\triangleq\; \max_{i\in\mathcal{T}} |\lambda_i|
\quad\text{and}\quad
\kappa(\rho) \;\triangleq\; \sqrt{\frac{1+\rho}{\,1-\rho\,}}.
\]
The scalar function $x\mapsto \sqrt{\tfrac{1+x}{1-x}}$ is increasing on $[0,1)$,
so for all $i\in\mathcal{T}$,
\[
\sqrt{\frac{1+|\lambda_i|}{1-|\lambda_i|}} \;\le\; \kappa(\rho).
\]
Combining with \eqref{eq:per-mode-energy-envelope} yields the uniform bound
\begin{equation}
\label{eq:uniform-per-mode}
\|\mG_i\|_\infty \;\le\; \kappa(\rho)\,\sqrt{E_i}, \qquad \forall i\in\mathcal{T}.
\end{equation}

\paragraph{Step 5: First aggregate bound (linear in $\sqrt{E_i}$).}
Substitute \eqref{eq:uniform-per-mode} into \eqref{eq:sum-of-modes}:
\begin{equation}
\label{eq:sum-of-roots}
\varepsilon
\;\le\; \sum_{i\in\mathcal{T}} \kappa(\rho)\,\sqrt{E_i}
\;=\; \kappa(\rho)\sum_{i\in\mathcal{T}}\sqrt{E_i}.
\end{equation}

\paragraph{Step 6: Second aggregate bound via Cauchy--Schwarz (root-of-sum).}
For nonnegative $\{a_i\}$ and $m=|\mathcal{T}|$, Cauchy--Schwarz gives
$\sum_{i=1}^m a_i \le \sqrt{m}\,\sqrt{\sum_{i=1}^m a_i^2}$.
Apply this with $a_i=\sqrt{E_i}$:
\[
\sum_{i\in\mathcal{T}} \sqrt{E_i} \;\le\; \sqrt{|\mathcal{T}|}\,\sqrt{\sum_{i\in\mathcal{T}} E_i}.
\]
Combining with \eqref{eq:sum-of-roots} yields
\begin{equation}
\label{eq:root-of-sum}
\varepsilon
\;\le\; \kappa(\rho)\,\sqrt{|\mathcal{T}|}\,\sqrt{\sum_{i\in\mathcal{T}} E_i}.
\end{equation}

\paragraph{Step 7: Final certificate (take the minimum).}
Both \eqref{eq:sum-of-roots} and \eqref{eq:root-of-sum} are valid; each can be tighter
depending on whether the pruned tail is concentrated or diffuse. We therefore report the minimum:
\[
\boxed{\;
\varepsilon \;=\; \|\mG-\widetilde{\mG}\|_\infty
\;\le\;
\kappa(\rho)\,\min\Big\{\ \sum_{i\in\mathcal{T}}\sqrt{E_i}\ ,\ \sqrt{|\mathcal{T}|}\,\sqrt{\sum_{i\in\mathcal{T}} E_i}\ \Big\}.
\;}
\]

\paragraph{Why the bound is finite (stability).}
Since the layer is stable, $|\lambda_i|<1$ for all $i$, so each $E_i=\alpha_i^2/(1-|\lambda_i|^2)$ is finite and $\rho<1$, which implies $\kappa(\rho)=\sqrt{(1+\rho)/(1-\rho)}<\infty$.

\paragraph{Special cases.}
If $\mB$ rows are normalized, then $E_i=\|\mC_{:,i}\|_2^2/(1-|\lambda_i|^2)$ (the derivation above applies verbatim).
For bidirectional layers, sum (or average) the forward/backward $\|\mC\|_2^2$ contributions when forming $E_i$;
the rank-1 and norm inequalities used here are unchanged.

\subsection{Mathematical comparison to LAST (and theory positioning)}
\label{comp_last}

\paragraph{Per-mode quantities (shorthand).}
$\alpha_i \triangleq \|\mC_{:,i}\|_2\,\|\mB_{i,:}\|_2$,
\quad $E_i=\dfrac{\alpha_i^2}{1-|\lambda_i|^2}$,
\quad $g_i^{(\infty)}\le \dfrac{\alpha_i}{1-|\lambda_i|}$,
\quad $\rho=\max_{i\in\mathcal{T}}|\lambda_i|$, 
\quad $\kappa(\rho)=\sqrt{\dfrac{1+\rho}{1-\rho}}$.

\paragraph{Layer-level envelopes (apples-to-apples).}
\begin{align}
\textbf{LAST (H$\infty$-first)}: \qquad 
\|\mG-\widetilde{\mG}\|_\infty &\;\lesssim\; \sum_{i\in\mathcal{T}} \frac{\alpha_i}{1-|\lambda_i|} 
\label{eq:last-bound}
\\[0.25em]
\textbf{AIRE (energy-first $\rightarrow$ H$\infty$)}: \qquad 
\|\mG-\widetilde{\mG}\|_\infty 
&\;\le\; \kappa(\rho)\,\min\Bigg\{\ \sum_{i\in\mathcal{T}} \frac{\alpha_i}{\sqrt{1-|\lambda_i|^2}}\ ,\ \sqrt{|\mathcal{T}|}\,\sqrt{\sum_{i\in\mathcal{T}} \frac{\alpha_i^2}{1-|\lambda_i|^2}}\ \Bigg\}.
\label{eq:aire-bound}
\end{align}

\paragraph{Key mathematical differences.}
\begin{enumerate}
\item \emph{Aggregation.} LAST sums \emph{peak gains} linearly, $\sum \alpha_i/(1-|\lambda_i|)$. AIRE aggregates \emph{energies} either linearly in $\sqrt{E_i}$ or sublinearly via $\sqrt{|\mathcal{T}|\,\sum E_i}$, which is strictly tighter for \emph{diffuse} tails (Cauchy--Schwarz gap).
\item \emph{Pole-radius dependence.} As $\rho\uparrow 1$, $1-|\lambda|^2\approx 2(1-|\lambda|)$. Both \eqref{eq:last-bound} and the first term in \eqref{eq:aire-bound} scale like $(1-\rho)^{-1}$; AIRE’s $\kappa(\rho)$ makes this dependence explicit and uniform across the pruned set.
\item \emph{Quantities needed.} LAST needs $\alpha_i$ and $(1-|\lambda_i|)^{-1}$ per pruned mode. AIRE needs $E_i$ (already computed for ranking) and a single $\rho$. No new per-mode statistics are required beyond AIRE’s pipeline.
\item \emph{Cross-layer coupling.} LAST’s guarantee is tied to per-mode peak surrogates. AIRE’s certificate is \emph{tail-centric} (energy tail + one $\rho$), aligning with prefix-normalized, contiguous-tail pruning and explaining whole-layer drops.
\end{enumerate}

\paragraph{When AIRE can be tighter.}
\emph{Diffuse tails:} if $|\mathcal{T}|$ is large but $\sum E_i$ is tiny, then
\[
\kappa(\rho)\,\sqrt{|\mathcal{T}|\,\sum E_i}
\ \ll\ 
\sum_{i\in\mathcal{T}} \frac{\alpha_i}{1-|\lambda_i|},
\]
making AIRE’s root-of-sum form substantially less conservative.

\paragraph{Conceptual positioning.}
LAST is worst-case-first and can be conservative when worst frequencies are rarely excited. AIRE is typical-case-first (energy/$H_2$), empirically enabling larger safe pruning.


\section{Experimental Details}
All S5 experiments were implemented in JAX~\citep{bradbury2018jax} and run on a single NVIDIA A100 / H100 accelerator (40\,GB or 80\,GB VRAM). Our models and training code are based on the public S5 ~\citep{smith2023simplified}\footnote{\url{https://github.com/lindermanlab/S5}}, S4D \citep{gu2022dssparam}\footnote{\url{https://github.com/srush/annotated-s4}} and Mamba(S6) \citealp{gu2023mamba}\footnote{\url{https://github.com/jsie7/ssm-benchmark}} implementation. Unless otherwise stated, we employ bidirectional SSM layers for all LRA tasks, and use the parallel-scan inference kernel provided by S5.
\subsection{Benchmarks / Tasks}
We evaluate on the Long Range Arena (LRA), a suite of long-context sequence problems spanning symbolic reasoning, byte-level text, document retrieval, and flattened vision, designed to probe modeling of dependencies across sequences up to 16k tokens.

\begin{itemize}
  \item[\textbf{ListOps}] 10-way classification over extended ListOps expressions~\citep{nangia2018listops}. Inputs are single-channel sequences (max length 2{,}048) encoding digits, operators, and bracket markers as one-hot vectors over 17 tokens. Splits: 96k train, 2k validation, 2k test.
  
  \item[\textbf{Text}] Binary sentiment classification on IMDB reviews at the byte level~\citep{maas2011learning}. Each example is a single-channel sequence up to 4{,}096 tokens, using a 129-symbol one-hot alphabet. Splits: 25k train, 25k test.
  
  \item[\textbf{Retrieval}] Binary document-pair classification on the ACL Anthology Network~\citep{radev2009aan}. The goal is to predict whether two documents share equivalent citation links. Each document is byte-tokenized with a 97-symbol one-hot encoding and capped at 4{,}000 tokens. Splits: 147{,}086 train pairs, 18{,}090 validation pairs, 17{,}437 test pairs.
  
  \item[\textbf{Image}] 10-way classification on flattened CIFAR-10~\citep{krizhevsky2009learning}, represented as single-channel sequences of length 1{,}024.
  
  \item[\textbf{Pathfinder}] Binary classification on flattened Pathfinder stimuli~\citep{linsley2018hgru}, determining whether two points are connected by a target path among distractors. Sequences are single-channel with length 1{,}024. Splits: 160k train, 20k validation, 20k test.
  
  \item[\textbf{Path-X}] A scaled, more challenging Pathfinder variant~\citep{linsley2018hgru} with single-channel sequences of length 16{,}384, again testing path connectivity under heavy clutter.

  \item[\textbf{Speech}] Speech command is a 35-way classification task on 1-second spoken utterances \citep{warden}. Each audio clip is a single-channel waveform of length 16,000 samples (16 kHz). For experiments with different sampling rates, we additionally create a downsampled version at 8 kHz. The resulting split contains 24,482 training examples, 5,246 validation examples, and 5,247 test examples.
\end{itemize}

\subsection{Hyperparameter}
We follow the LRA protocol across six tasks—ListOps, Text, Retrieval, Image, Pathfinder, and Path-X—tuning depth \((L)\), hidden width \((h)\), and SSM state size \((n_m)\) per task.
Byte-level Text and long-context Retrieval favor larger \(J\) (more blocks) and higher \(n_m\) to capture long-range dependencies; flattened Image/Pathfinder variants use wider channels (\(h\)) with moderate dropout \(D\) for regularization.
Learning rates are decoupled, with a smaller SSM LR than the global LR to stabilize eigen-parameter updates, and modest weight decay (WD) throughout. Batch sizes \(B\) and epochs \(E\) reflect dataset scale (e.g., longer training for Image/Pathfinder), while \(\Delta_{\min}\) fixes the minimum step size used in state discretization.
This setup (as in Table \ref{tab:lra_hparams_s5}provides a consistent, comparable training recipe across heterogeneous long-context workloads.

\subsection{Efficiency Impact of Dimension-Reduced S5 Models}
\label{Efficiency_mamba}
To assess the practical efficiency impact of AIRE-Prune for mamba (S6), we implemented pruning by removal in addition to the standard masking-based implementation, transferring the retained high-importance states into a lower-dimensional S5 model. Table \ref{metric} reports the average evaluation-step throughput and the corresponding parameter reduction achieved by the pruned S5 models on an NVIDIA H100 GPU. Shrinking the state dimension consistently improves both compute cost and model size, with the magnitude of savings varying across tasks as a function of channel width.

\begin{table}[t]
\centering
\caption{Training configurations for S5 models on the six LRA tasks. 
All runs use batch normalization, pre-normalization, and \(\Delta_{\max}=0.1\).
\(n_m\): SSM state dimension; \(J\): blocks for diagonal/block init of \(\Lambda\);
\(D\): dropout; LR: global learning rate; SSM LR: learning rate for SSM-only params;
\(B\): batch size; \(E\): epochs; WD: weight decay; \(\Delta_{\min}\): minimum step.}
\label{tab:lra_hparams_s5}
\renewcommand{\arraystretch}{1.2}
\begin{tabular}{l|rrrrrrrrrrr}
\toprule
\textbf{Task} & \(\mathbf{L}\) & \(\mathbf{h}\) & \(\mathbf{n_m}\) & \(\mathbf{J}\) & \(\mathbf{D}\) & \textbf{LR} & \textbf{SSM LR} & \(\mathbf{B}\) & \(\mathbf{E}\) & \textbf{WD} & \(\boldsymbol{\Delta_{\min}}\) \\
\midrule
ListOps     & 8 & 128 & 16  & 8  & 0    & 0.003 & 0.001  & 50 & 40  & 0.07 & 0.001 \\
Text        & 6 & 256 & 192 & 12 & 0.1  & 0.004 & 0.001  & 50 & 35  & 0.07 & 0.001 \\
Retrieval   & 6 & 128 & 256 & 16 & 0    & 0.002 & 0.001  & 32 & 20  & 0.05 & 0.001 \\
Image       & 6 & 512 & 384 & 3  & 0.1  & 0.005 & 0.001  & 50 & 250 & 0.07 & 0.001 \\
Pathfinder  & 6 & 192 & 256 & 8  & 0.05 & 0.005 & 0.0009 & 64 & 200 & 0.07 & 0.001 \\
Path-X      & 6 & 128 & 256 & 16 & 0    & 0.002 & 0.0006 & 32 & 75  & 0.05 & 0.001 \\
\bottomrule
\end{tabular}
\end{table}

\begin{table}[t]
\centering
\caption{AIRE-Prune: pruning ratios and resulting inference speedups and parameter-memory reductions for \textbf{Mamba (S6)}}
\label{metric}
\small
\setlength{\tabcolsep}{6pt}
\begin{tabular}{lcccccc}
\toprule
\textbf{Metric $\downarrow$ / Dataset $\rightarrow$}
& \textbf{ListOps}
& \textbf{Text}
& \textbf{Retrieval}
& \textbf{Image}
& \textbf{Pathfinder}
& \textbf{Path-X} \\
\midrule
\textbf{Pruning ratio}
& --  & 45\% & 60\% & 80\% & -- & -- \\

\textbf{Inference speedup ($\times$)}
& --  & 1.61$\times$ & 1.57$\times$ & 2.03$\times$ & -- & -- \\

\textbf{Parameter reduction (\%)}
& --  & 19.1\% & 8.1\% & 12.1\% & -- & -- \\
\bottomrule
\end{tabular}

\label{tab:mamba_speed_param_reduction}
\end{table}

\end{document}